\documentclass{article}

\usepackage{PRIMEarxiv}
\usepackage[utf8]{inputenc} 
\usepackage[T1]{fontenc}    
\usepackage{hyperref}       
\usepackage{url}            
\usepackage{booktabs}       
\usepackage{amsfonts}       
\usepackage{nicefrac}       
\usepackage{microtype}      
\usepackage{fancyhdr}       
\usepackage{graphicx}       

\usepackage{latexsym}
\usepackage{multicol,multirow}
\usepackage{amsmath,amssymb}
\usepackage{mathrsfs}
\usepackage{amsthm}
\usepackage{rotating}
\usepackage{appendix}
\usepackage[authoryear]{natbib}
\usepackage{ifpdf}
\usepackage{times}
\usepackage{newtxmath}
\usepackage{textcomp}
\usepackage[table,xcdraw]{xcolor}
\usepackage{footnote}
\makesavenoteenv{tabular}

\newtheorem{proposition}{Proposition}

\title{Emulating Non-Differentiable Metrics via Knowledge-Guided Learning: Introducing the Minkowski Image Loss}

\author{
  Filippo Quarenghi\thanks{Corresponding author. \texttt{filippo.quarenghi@gmail.com}} \\
  Faculty of Geosciences and Environment \\
  Expertise Center for Climate Extremes (ECCE) \\
  University of Lausanne \\
  Lausanne, VD, Switzerland
  \And
  Ryan Cotsakis \\
  Expertise Center for Climate Extremes (ECCE) \\
  University of Lausanne \\
  Lausanne, VD, Switzerland
  \And
  Tom Beucler \\
  Faculty of Geosciences and Environment \\
  Expertise Center for Climate Extremes (ECCE) \\
  University of Lausanne \\
  Lausanne, VD, Switzerland
}

\begin{document}
\maketitle

\begin{abstract}
The ``differentiability gap'' presents a primary bottleneck in Earth system deep learning: since models cannot be trained directly on non-differentiable scientific metrics and must rely on smooth proxies (e.g., MSE), they often fail to capture high-frequency details, yielding ``blurry'' outputs. We develop a framework that bridges this gap using two different methods to deal with non-differentiable functions: the first is to analytically approximate the original non-differentiable function into a differentiable equivalent one; the second is to learn differentiable surrogates for scientific functionals. We formulate the analytical approximation by relaxing discrete topological operations using temperature-controlled sigmoids and continuous logical operators. Conversely, our neural emulator uses Lipschitz-convolutional neural networks to stabilize gradient learning via: (1) spectral normalization to bound the Lipschitz constant; and (2) hard architectural constraints enforcing geometric principles. We demonstrate this framework's utility by developing the Minkowski image loss, a differentiable equivalent for the integral-geometric measures of surface precipitation fields (area, perimeter, connected components). Validated on the EUMETNET OPERA dataset, our constrained neural surrogate achieves high emulation accuracy, completely eliminating the geometric violations observed in unconstrained baselines, which generate physically impossible precipitation fields in up to 7.8\% of cases. However, applying these differentiable surrogates to a deterministic super-resolution task reveals a fundamental trade-off: while strict Lipschitz regularization ensures optimization stability, it inherently over-smooths gradient signals, restricting the recovery of highly localized convective textures. This work highlights the necessity of coupling such topological constraints with stochastic generative architectures to achieve full morphological realism.
\end{abstract}

\keywords{differentiability \and knowledge-guided machine learning \and Minkowski functionals \and radar precipitation \and super-resolution}

\section{Introduction}

Structural realism in predicted precipitation fields depends on metrics that are inherently non-differentiable, creating a bottleneck for gradient-based optimization. To bridge this gap, we leverage knowledge from integral geometry, which reveals organizational biases often missed by standard statistical verification \citep{https://doi.org/10.1029/2010JD014741}. Unlike recent work exploring surrogates in latent spaces \citep{patel2020learningsurrogatesdeepembedding}, we enforce these properties directly in the optimization landscape. We introduce a framework situated at the intersection of differentiable programming \citep{gmd-16-3123-2023} and knowledge-guided machine learning \citep{Karpatne2017TheoryGuided}. Within this framework, we compare an analytical approximation against a neural emulator, both designed to map the level sets of precipitation fields to key integral-geometric measures (area, perimeter, and number of connected components). By relaxing this discrete combinatorial problem into a continuous regression task \citep{Hu2019TopologyPreserving}, we enable the direct backpropagation of structural error signals. Crucially, we ensure emulation stability adopting a knowledge-guided Lipschitz-convolutional neural network (Lip-CNN) \citep{Gouk2021Regularisation}. This architecture combines spectral normalization---to prevent exploding gradients---with hard geometric constraints (monotonicity, isoperimetry) embedded in the computation graph.

Concurrent efforts to integrate non-differentiable operations into automatic differentiation frameworks, such as the SoftJAX and SoftTorch libraries \citep{paulus2026softjaxsofttorchempowering}, provide general-purpose continuous relaxations for elemental primitives (e.g., thresholding, fuzzy logic). While these drop-in replacements facilitate the formulation of analytical approximations akin to our continuous baseline, cascading numerous element-wise soft operators through complex topological algorithms frequently exacerbates gradient attenuation and mathematical entanglement. In contrast, our Lipschitz-regularized neural emulation circumvents the need for explicit operator relaxation by learning a holistic, topologically informed mapping that ensures robust structural fidelity and stable gradient flow.

Our primary contributions are: (1) proposing a general framework for injecting non-differentiable domain knowledge into training via learned surrogates, compared against standard analytical approximations; (2) introducing the Minkowski image loss, a differentiable objective that penalizes errors in rigid-motion invariants (area, perimeter, connected components); and (3) demonstrating that hard geometric constraints paired with Lipschitz regularization prevents violating geometric principles and overfitting observed in unconstrained baselines.


\begin{figure}[htbp]
    \centering
    \includegraphics[width=.68\linewidth]{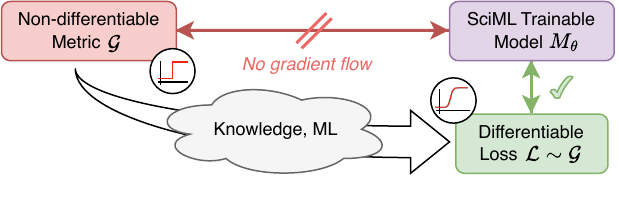}
    \caption{Conceptual overview of the differentiable surrogate training framework. The lack of gradient flow (red connection) prevents the direct optimization of SciML models ($M_\theta$) using non-differentiable scientific metrics ($\mathcal{G}$), which typically involve discrete operations like thresholding (step function plot). We overcome this limitation by using knowledge and machine learning techniques to create a differentiable surrogate loss ($\mathcal{L}$) that approximates the target metric. This differentiable loss, with a well-behaved optimization landscape, can then be successfully integrated into the model's training pipeline (green connection), enabling the direct backpropagation of structurally and physically relevant error signals}   
    \label{fig:Fig1}
\end{figure}

\section{Methodology}

\subsection{Approaches to Non-Differentiable Metrics}
\label{sec:emulation_method}

Scientific evaluation often relies on functionals $f: \mathcal{X} \rightarrow \mathcal{Y}$ involving discrete operations such as counting, thresholding, or binning. These operations are inherently non-differentiable; they manifest as piecewise constant functions whose gradient $\nabla_{\mathbf{x}} f$ is zero almost everywhere or undefined at jump discontinuities (Dirac deltas), producing a loss surface of flat plateaus that prevents the backpropagation of error signals required to update an upstream model. Two primary methodologies exist to circumvent this limitation. The first relies on \textit{analytical approximation}, wherein discrete operations are substituted with continuous, smooth relaxations yielding a closed-form differentiable surrogate. The second leverages \textit{neural emulation}, employing an artificial neural network to map the input space directly to the target metric; because neural networks are differentiable by construction, this learned surrogate integrates seamlessly into the computational graph.

\subsubsection{Analytical Approximation of Non-differentiable Metrics}
The analytical approximation strategy aims to replace non-differentiable operations with smooth variants that permit stable gradient flow. For example, binary thresholding—mathematically represented by the Heaviside step function $H(x)$—is commonly relaxed using a logistic sigmoid function $\sigma(x / \tau)$. Here, $\tau > 0$ acts as a temperature parameter controlling the steepness of the approximation. As $\tau \to 0$, the continuous function converges to the exact discrete behavior, but its gradients vanish almost everywhere; conversely, larger values of $\tau$ provide stable gradients at the cost of approximation fidelity. Consequently, $\tau$ is often annealed dynamically during optimization. Similar continuous relaxations are employed for discrete selection operations, such as replacing $\arg\max$ with a differentiable softmax approximation \citep{jang2017categoricalreparameterizationgumbelsoftmax}. While these analytical substitutions preserve the mechanistic interpretability of the original metric, deriving stable, closed-form relaxations for complex, multi-step spatial metrics is not always mathematically possible.

\subsubsection{Emulation of Non-Differentiable Metrics: Surrogate Modeling and Lipschitz Regularization} \label{sec:emulation}
To restore differentiability, we approximate the discrete map $f$ with a parameterized neural surrogate $f_\theta: \mathcal{X} \rightarrow \mathcal{Y}$. Minimizing the regression error on static data is insufficient: For $f_\theta$ to serve as a stable loss function, it must provide ``well-behaved'' gradients that are informative across the input domain $\mathcal{D}$.

\textit{Gradient Propagation}: A critical requirement is non-vanishing input sensitivity. To avoid the ``dead zones'' associated with rectifying activations (e.g., ReLU where $\nabla=0$ for $x < 0$), we employ smooth, $C^\infty$ activation functions---specifically the Gaussian error linear unit \citep[GELU;][]{Hendrycks2016GELU}. Furthermore, to facilitate gradient flow through deep architectures, we utilize residual connections $\mathbf{y} = \mathbf{x} + F(\mathbf{x})$, allowing error signals to propagate unimpeded through the identity path.

\textit{Manifold Smoothing}: Early in training, the upstream model may yield a distribution $P_{\mathrm g}$ far from the data distribution $P_{\text{data}}$; if trained only on $P_{\text{data}}$'s support, the emulator can yield arbitrary gradients for $\mathbf{x}\sim P_{\mathrm g}$. We hence inject Gaussian noise $\boldsymbol{\eta}\sim\mathcal{N}(\mathbf{0},\sigma^2\mathbf{I})$ during emulator training, effectively convolving the target functional with a smoothing kernel and improving robustness to off-manifold queries. (In practice, we found this step unnecessary for the OPERA dataset given the mix-up augmentation described in Section 3.1, and set $\sigma^2 = 0$ in all reported experiments.)

\textit{Lipschitz Constraints}: Finally, we bound the gradient norm to prevent optimization instability. Unconstrained neural networks can learn arbitrarily sharp gradients, leading to exploding loss terms. We enforce stability by constraining the Lipschitz constant $K$ of the network mapping. While soft constraints like the gradient penalty \citep{Gulrajani2017WGAN} are common in adversarial training, they do not guarantee bounds on unseen data. Instead, we enforce a hard constraint via spectral normalization \citep{Miyato2018Spectral} on all weight matrices $W_l$. This ensures that the global Lipschitz constant is bounded by the product of the layer-wise spectral norms: $ \| f_{\theta}(\mathbf{x}) - f_{\theta}(\mathbf{y}) \|_2 \leq K \| \mathbf{x} - \mathbf{y} \|_2$, implying  $\sup_{\mathbf{x} \in \mathcal{D}} \| \nabla_{\mathbf{x}} f_{\theta}(\mathbf{x})\|_2 \leq K$, almost everywhere in $\mathcal D$, where typically $K \approx 1$ and $\|.\|_2$ is the $L^2$ norm. While fixing $K$ prior to training limits the network's expressivity if the true functional varies rapidly \citep{Ducotterd2022Lipschitz}, for the integral-geometric measures considered in Section~\ref{sec:minkowski_loss}, empirical results suggest that the stability benefits of a hard $K=1$ constraint outweigh the theoretical expressivity loss.

\subsection{Minkowski Image Loss: Injecting Knowledge from Integral Geometry}
\label{sec:minkowski_loss}

\subsubsection{Integral-Geometric Characterization via Minkowski Functionals}
To capture the structural properties of physical fields beyond pixel-wise intensity, we adopt a framework based on the \textit{Minkowski functionals} from integral geometry \citep{schneiderStochastic2008}. For a compact set $K \subset \mathbb{R}^2$ with a sufficiently smooth boundary (see Appendix~\ref{sec:minkowski}), there exist three Minkowski functionals: the area, half the perimeter, and the Euler characteristic. These functionals are denoted $A_K$, $P_{K/2}$, and $\chi_K$, respectively. The Euler characteristic describes the connectivity of the set and is defined in 2D as $\chi_K = \mathrm{CC}_K - H_K$, where $\mathrm{CC}_K$ is the number of connected components and $H_K$ is the number of holes. The suitability of these measures for a loss function is rigorously grounded in Hadwiger’s Characterization Theorem \citep{Hadwiger1957}, which establishes that any functional on $K$ satisfying rigid-motion invariance, continuity, and additivity can be expressed as a linear combination of these three measures. These mathematical properties map directly to critical advantages in neural network training: (i) \textit{Rigid-motion invariance:} Decouples the loss from the exact spatial positioning of extreme events, mitigating the ``double penalty'' problem \citep{Ebert2008Fuzzy} inherent in pixel-wise metrics where a slight displacement is penalized twice (as a miss and a false alarm). (ii) \textit{Continuity:} Ensures gradient stability, as minor shape deformations result in commensurate, small changes in the loss value, unlike discontinuous metrics (e.g., pure object counting) which exhibit step functions.

\subsubsection{The $\gamma$-vector and Minkowski Image Loss Formulation}
A continuous physical field is transformed into $N$ binary images (excursion sets) by thresholding at $N$ intensity levels, $u^{(1)}, \dots, u^{(N)}$. For each excursion set, we compute the area ($A$), perimeter ($P$), and the number of connected components ($\mathrm{CC}$). The values of these functionals for the $i$-th threshold are denoted $A^{(i)}$, $P^{(i)}$, and $\mathrm{CC}^{(i)}$. We concatenate these statistics into a single non-negative vector $\boldsymbol{\gamma}$ that provides a multi-level integral-geometric summary of the field:
\begin{align}
    \label{eqn:gamma}
    \boldsymbol{\gamma} = [A^{(1)}, P^{(1)}, \mathrm{CC}^{(1)}, \ldots, A^{(N)}, P^{(N)}, \mathrm{CC}^{(N)}]^\top \in \mathbb{R}^{3N}.
\end{align}
The distance between two $\boldsymbol{\gamma}$-vectors serves as a proxy for the structural divergence between the predicted and target fields. Integrating this metric into training ensures the network minimizes geometric error alongside pixel-wise error. The reader is referred to Appendix~\ref{appendix:gamma-targets} for further details regarding the computation of the $\gamma$-vector from data.

A significant challenge is the order-of-magnitude disparity between vector components; $A^{(i)}$ (pixel summation) can be orders of magnitude larger than $\mathrm{CC}^{(i)}$ (sparse count). Without normalization, gradients would be dominated by area terms, ignoring topological structure.
To address this, we define the \textit{Minkowski image loss} $\mathcal{M}$ using a log-transformed $L^1$ distance. This transformation effectively normalizes the scale differences, operating as a relative error metric that balances the contribution of large-magnitude features (area) and small-magnitude features (connectivity):
\begin{equation}
    \label{eq:geo_loss}
    \mathcal{M}(\hat{\boldsymbol{\gamma}}, \boldsymbol{\gamma}) = \| \log(\hat{\boldsymbol{\gamma}} + \mathbf{1}) - \log(\boldsymbol{\gamma} + \mathbf{1}) \|_1
\end{equation}
where $\mathbf{1}$ is a vector of ones to ensure numerical stability and $\|.\|_1$ is the $L^1$ norm. This Minkowski image loss functions as a regularizer alongside standard pixel-wise metrics (e.g., MSE). While pixel loss enforces local intensity accuracy, $\mathcal{M}$ acts as a global structural constraint. Since the direct computation of $\boldsymbol{\gamma}$ involves non-differentiable operations (thresholding, counting) that break the computation graph, we employ the knowledge-guided differentiable surrogate described in Section~\ref{sec:emulation_method}. This surrogate allows the error signal $\nabla_{\hat{\boldsymbol{\gamma}}} \mathcal{M}$ to be backpropagated through the emulator to update the trainable model.

\subsection{Application to Integral-Geometric Measures}

While analytical approximations directly relax discrete metrics, formulating closed-form solutions for complex topologies proves computationally restrictive and prone to vanishing gradients. As a robust alternative, we introduce a neural emulator as a learned differentiable surrogate, approximating the non-differentiable mapping $F: \mathcal{X} \rightarrow \mathbb R^{3N}$ from physical fields to $\boldsymbol{\gamma}$-vectors. Standard pixel-wise metrics (e.g., MSE) inherently smooth the multi-scale spatial organization critical to precipitation modeling. By embedding this emulator into the loss function, we directly penalize geometric errors. We train the model to regress exact ground-truth descriptors (area, perimeter, connected components) generated via \texttt{GUDHI} \citep{GUDHI2014}. The bipartite architecture pairs a Lipschitz-regularized feature extractor with geometrically constrained heads to decode valid $\boldsymbol{\gamma}$-vector components.

\subsubsection{Analytical Approximation of Integral-Geometric Measures} \label{sec:analapprox}

Our analytical approximation substitutes the non-differentiable Heaviside step function with a temperature-controlled sigmoid, mapping the continuous physical field into a pseudo-probability space $P\in(0,1)$ to enable differentiable tensor operations. We calculate area as the weighted sum of $P$ and derive perimeter from finite-difference approximations of spatial derivatives. To compute the Euler characteristic (a proxy for connected components), we evaluate the alternating sum of grid vertices, edges, and faces, replacing discrete boolean logic with the G\"odel t-norm (\texttt{min} operator) to ensure continuous gradient flow. To stabilize these topological metrics against minor fluctuations, we preprocess the input tensor using differentiable morphological filters and a local persistence threshold mask, ultimately aggregating and scaling the measures via a symmetric logarithm.

\subsubsection{Emulation of Integral-Geometric Measures via Knowledge-Guided Learning}

\begin{figure}[]
    \centering
    \includegraphics[width=\linewidth]{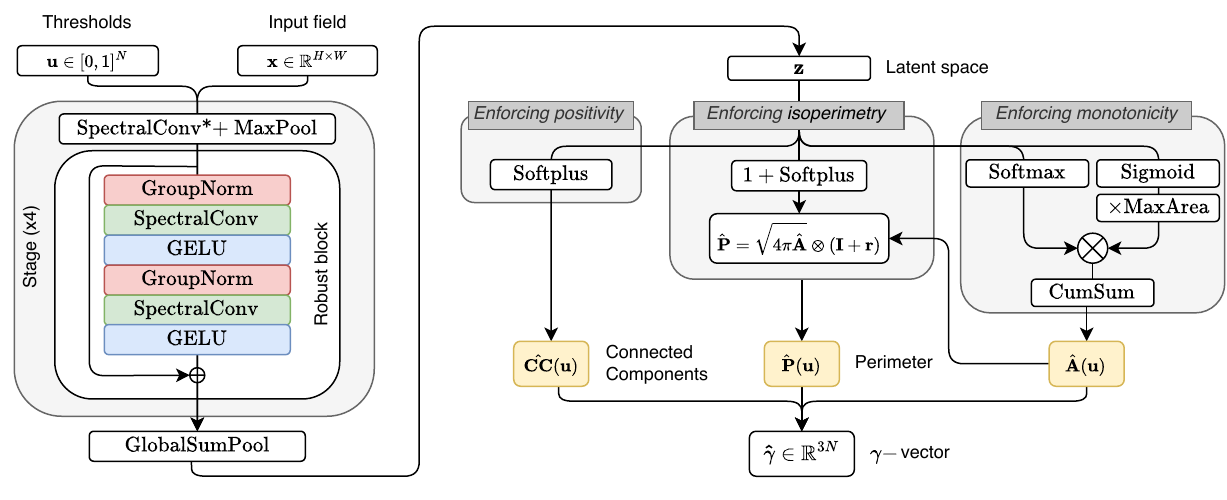}
    \caption{\textbf{(Left) Lipschitz-Bound Feature Extractor:} The backbone acts as a hierarchical encoder. Stability is enforced via spectral normalization and residual connections. We use global sum pooling to preserve the extensivity of geometric features. \textbf{(Right) Geometric Constraint Heads:} The Area $\hat{A}(u)$ is constructed via integration of a predicted probability density to enforce monotonicity. The perimeter $\hat{P}(u)$ is conditioned on the area via the isoperimetric inequality, preventing geometric principles' violations}
    \label{fig:arch}
\end{figure}

\paragraph{Feature Extractor with Lip-CNN}
To map the physical field $\mathbf{x} \in \mathbb{R}^{H \times W}$ to the latent geometric manifold $\mathbf{z}$, we employ a CNN encoder designed to respect the mathematical constraints of the $\boldsymbol{\gamma}$-vector. The input field is normalized to $[0, 1]$ via a fixed scaling factor. The backbone processes the field through four hierarchical stages (Figure~\ref{fig:arch}, left), with spectral normalization \citep{Miyato2018Spectral} applied to all convolutional kernels except the first layer, which is left unconstrained to allow unrestricted input scaling. Deep feature extraction is handled by ``robust blocks'' containing residual connections and group normalization, ensuring invariance to batch statistics. We use global sum pooling rather than average pooling, since area and connected components are extensive quantities whose additivity must be preserved as the latent representation scales with domain size.

\paragraph{Knowledge-Guided Geometric Constraints}
\label{sec:constrained}
Unlike standard multi-task regression where outputs are independent, our architecture couples the output heads to strictly enforce geometric consistency.

\textit{Area Head (Monotonicity via Integration).} Directly regressing the area metric $A^{(i)}$ for each threshold fails to guarantee the monotonicity property $A^{(i)} \ge A^{(i+1)}$ (as threshold increases, excursion set area decreases). To resolve this, we use the fact that the area function is effectively the cumulative distribution function of the field intensities. The network predicts a discrete probability density using a softmax layer and a scalar total area $A_{\text{total}}$. The final area vector $\mathbf{\hat{A}}$ is computed via the cumulative sum of these probabilities. This inductive bias ensures that the predicted area is strictly monotonic by construction.

\textit{Perimeter Head (Isoperimetric Inequality).} We strictly condition the perimeter on the area by predicting a dimensionless ``roughness'' coefficient $\hat{\mathbf{r}} \ge 0$ (via a Softplus activation) rather than the absolute perimeter. The perimeter is then reconstructed as $ \mathbf{\hat{P}} = \sqrt{4\pi\mathbf{\hat{A}}} \odot (\mathbf{1} + \mathbf{\hat{r}})$. This hard-codes the isoperimetric inequality ($P^2 \ge 4\pi A$) into the forward pass. It prevents the model from generating a perimeter smaller than that of a perfect circle with equivalent area, ensuring 0\% geometrical violations.

\textit{Topological Head (Positivity).} The number of connected components $\mathrm{CC}$ is a counting measure. To reflect this, the output is rectified using Softplus ($\log(1+e^x)$), ensuring $\widehat{\mathrm{CC}} \in \mathbb{R}_+$ while maintaining differentiable non-zero gradients, unlike a hard ReLU clip which may lead to ``dead neurons''.

\paragraph{Training Setup} Motivated by the target data's statistical properties, we optimize $\theta$ using the Minkowski image loss: $ \mathcal{L}_{\text{train}} = \mathcal{M}(\hat{\boldsymbol{\gamma}}_{\theta}(\mathbf{x}), \boldsymbol{\gamma}_{\text{true}})$. Logarithmic scaling prevents large disparities between area and connected components, while the $L^1$ norm ensures stability against outliers from complex storm structures.

\section{Application: Observed Instantaneous Surface Precipitation Fields}

Emulating Minkowski functionals is particularly relevant for precipitation, which---unlike smooth fields---exhibits high intermittency, non-Gaussian statistics, and complex multi-scale geometries. Standard pixel-wise losses often fail to capture these morphological properties, leading to structurally amorphous (``blurred'') predictions. By applying our Minkowski image loss to radar-based precipitation estimates, we aim to encourage realistic structures in generative tasks, specifically km-scale downscaling.

\subsection{Dataset and Preprocessing}

We validate  our approach using the pan-European EUMETNET radar composites provided by the European Operational Program for Exchange of Weather Radar Information (OPERA) OPERA \citep{Saltikoff2019OPERA}. We utilize the instantaneous surface rain rate product, leveraging high-definition streams (ODISSEY/NIMBUS) with 15-minute updates. Spanning Aug 1, 2023 to Oct 30, 2024, this archive yields >1M patches of 128$\times$128 pixels (256$\times$256\,km) across diverse regimes, from localized summer convection to large-scale winter synoptic storms. We partition the dataset chronologically to preclude temporal autocorrelation: data from August 2023 through June 2024 is utilized for training, July 2024 for validation, and August through October 2024 is reserved for independent testing. We apply a drizzle threshold $\delta = 0.1 \, \mathrm{mm\,h^{-1}}$; intensity values below $\delta$ are zeroed out to remove sensor noise. For training, data undergoes a log-linear transformation to compress the dynamic range: $x_{\text{norm}} = \log(1+x)/S$, where $S$ is the global maximum of the log-transformed training set. To enforce robustness, we apply an offline ``mix-up'' strategy in physical space. We generate synthetic convex combinations of ground truth fields and noise-injected interpolated fields to ensure off-manifold robustness: $\mathbf{x}_{\text{mix}} = \lambda \mathbf{x}_{\text{real}} + (1 - \lambda)(\mathbf{x}_{\text{interp}} + \boldsymbol{\epsilon})$ where $\lambda \sim \text{Beta}(\alpha, \alpha)$ and $\boldsymbol{\epsilon} \sim \mathcal{N}(\mathbf{0}, \sigma^2 \mathbf{I})$.

\subsection{Evaluated Architectures and Baselines}

\noindent \textbf{Statistical baseline: bicubic interpolation} As a naive deterministic baseline that preserves total domain mass without introducing high-frequency artifacts, we employ bicubic interpolation. This non-parametric approach acts as a lower bound for structural fidelity, isolating the performance gains achievable purely through the spatial redistribution of the low-resolution input.

\noindent \textbf{Analytical approximation} To evaluate the efficacy of the proposed continuous relaxation, we integrate the analytical surrogate (detailed in Section \ref{sec:analapprox}) into the optimization loop. This model operates on identical inputs but replaces the learned emulation heads with differentiable temperature-controlled sigmoid and G\"odel t-norm operations.

\noindent \textbf{Surrogate CNN emulators ablation study} To quantify the impact of embedding domain knowledge, we perform an ablation study comparing three distinct architectures. This hierarchy disentangles the benefits of gradient stabilization (Lipschitz continuity) from the benefits of hard geometric constraints.

\textit{Unconstrained CNN}: A standard convolutional neural network employing ReLU activations, batch normalization, and global average pooling. This represents a naive deep learning baseline. It treats the components of the $\boldsymbol{\gamma}$-vector ($A, P, \mathrm{CC}$) as independent regression targets.
    
\textit{Unconstrained Lip-CNNs} use our specialized backbone (spectral normalization, group normalization, residual layers, global sum pooling) to ensure gradient stability but retain standard, independent regression heads. This isolates the benefit of Lipschitz regularization but lacks structural constraints; it remains capable of predicting geometrically impossible states (e.g., non-monotonic area curves).
    
\textit{Constrained Lip-CNN}: The full architecture described in Section \ref{sec:constrained}. This model combines the Lipschitz-regularized backbone with the geometrically constrained output heads, structurally enforcing monotonicity, non-negativity, and isoperimetric consistency.

\subsection{Verification via Diagnostic Input Optimization}
We assess the utility of the emulator's gradients using a diagnostic feature inversion test. This procedure involves freezing the emulator weights $\theta$ and iteratively updating a noise vector $\mathbf{x}$ to minimize the distance to a target metric value $\mathbf{y}_{\text{target}}$: $\mathbf{x}_{t+1} \leftarrow \mathbf{x}_t - \alpha \nabla_{\mathbf{x}} \| f_{\theta}(\mathbf{x}_t) - \mathbf{y}_{\text{target}} \|_2$. This acts as a unit test for the gradient field. If $f_\theta$ has learned structurally robust features, the optimization recovers an input $\mathbf{x}^*$ consistent with domain physics (e.g., connected precipitation blobs). If the optimization produces high-frequency adversarial noise, it indicates the gradients are ill-conditioned, requiring stricter regularization.

\subsection{Application of Minkowski image loss to Super-resolution of Instantaneous Surface Rain Rate}

To evaluate our framework, we formulate a perfect-model super-resolution task for instantaneous surface rain rates from the OPERA dataset, emulating a transition from coarse global circulation models ($\sim$25 km) to high-resolution radar (2 km). We generate low-resolution inputs via $12.5\times$ block-averaging, concatenating them with a normalized digital elevation model to condition the downscaling on orographic forcing. Target fields undergo a normalized log-space transformation to stabilize optimization across highly skewed precipitation distributions. We benchmark a deterministic log-space residual U-Net against a conditional denoising diffusion probabilistic model (DDPM), evaluated exclusively via deterministic DDIM sampling to ensure rigorous comparison. We test whether augmenting the mean squared error (MSE) with the Minkowski image loss (via analytical approximation or neural emulation) imparts generative-level structural realism to deterministic architectures, thereby circumventing the substantial computational sampling costs of diffusion models. Appendix \ref{AppendixSR} details the implementation.

\section{Results}
\subsection{Accuracy and Geometric Fidelity of Analytical Approximation and Emulation}

\begin{table}
    \centering
    \small
    \caption{\textbf{Comparison of analytical approximation and emulation.} Performance is measured via the Minkowski image loss $\mathcal{M}$ (lower is better), the coefficient of determination $R^2$ for the full vector and individual components (Area (A), Perimeter (P), Connected Components (CC)), and the rate of isoperimetric violations ($\nu_{\text{Iso}}$).}
    \begin{tabular}{lcccccc}
        \toprule
        \textbf{Architecture} & $\mathcal{M}$ ($\downarrow$) & $R^2$ ($\uparrow$) & $R^2_A$ & $R^2_P$ & $R^2_{\mathrm{CC}}$ & $\nu_{\text{Iso}}$ (\%) \\
        \midrule
        PCR & 5.36 & 0.35 & 0.46 & 0.39 & 0.25 & 27.0 \\
        Analytical Approx. & \textbf{1.03} & 0.67 & \textbf{0.99} & 0.92 & 0.14 & \textbf{0.3} \\
        CNN (Unconstrained) & 5.77 $\pm$ 0.02 & 0.95 $\pm$ 0.02 & 0.94 $\pm$ 0.02 & 0.94 $\pm$ 0.01 & 0.96 $\pm$ 0.02 & 7.8 $\pm$ 0.3 \\
        Lip-CNN (Unconstrained) & 5.74 $\pm$ 0.01 & 0.88 $\pm$ 0.03 & 0.79 $\pm$ 0.03 & 0.91 $\pm$ 0.01 & 0.95 $\pm$ 0.03 & 5.8 $\pm$ 0.2 \\
        \textbf{Lip-CNN (Constrained)} & 5.73 $\pm$ 0.01 & \textbf{0.99 $\pm$ 0.01} & \textbf{0.99 $\pm$ 0.01} & \textbf{0.99 $\pm$ 0.01} & \textbf{0.97 $\pm$ 0.02} & \textbf{0.00} \\
        \bottomrule
    \end{tabular}
    \label{tab:emulator-comparison}
\end{table}

To establish performance bounds, we evaluate a linear principal component regression (PCR) baseline. PCR demonstrates poor predictive capability (overall $R^2 = 0.35$) and a high isoperimetric violation rate ($\nu_{\text{Iso}} = 27.0\%$), confirming the highly non-linear mapping from physical fields to integral-geometric measures (Table~\ref{tab:emulator-comparison}). Conversely, the analytical approximation achieves the lowest Minkowski image loss ($\mathcal{M} = 1.03$) and near-zero isoperimetric violations (0.3\%). While perfectly capturing area ($R^2_A = 0.99$) and approximating perimeter well ($R^2_P = 0.92$), its overall $R^2$ (0.67) is hindered by a near-complete failure to predict connected components ($R^2_{\mathrm{CC}} = 0.14$; see Appendix~\ref{Appendix:Chi}). 

While all neural architectures converged, the constrained Lip-CNN achieves the best performance across all metrics, definitively eliminating isoperimetric violations ($\nu_{\text{Iso}} = 0\%$). The unconstrained Lip-CNN underperforms the vanilla CNN baseline; while $\mathcal{M}$ remains comparable, $R^2$ scores degrade significantly (e.g., area $0.94 \rightarrow 0.79$). Imposing spectral normalization bounds the Lipschitz constant, limiting representation power. Without geometric constraint heads, this ``stiffened'' backbone struggles to fit high-variance data. Pairing the Lipschitz backbone with geometric constraints (Lip-CNN Constr.) recovers and surpasses baseline performance. Architectural inductive biases (monotonicity and isoperimetry) effectively reduce hypothesis space complexity, acting as a scaffold that allows the stable backbone to converge to a physically valid solution despite reduced capacity. Appendix~\ref{appendix:eval_emu} provides further evaluation and examples of the emulators' capabilities.

\subsection{Diagnostic Inversion and Gradient Stability}

\begin{figure} [h]
    \centering
    \includegraphics[width=0.95\linewidth]{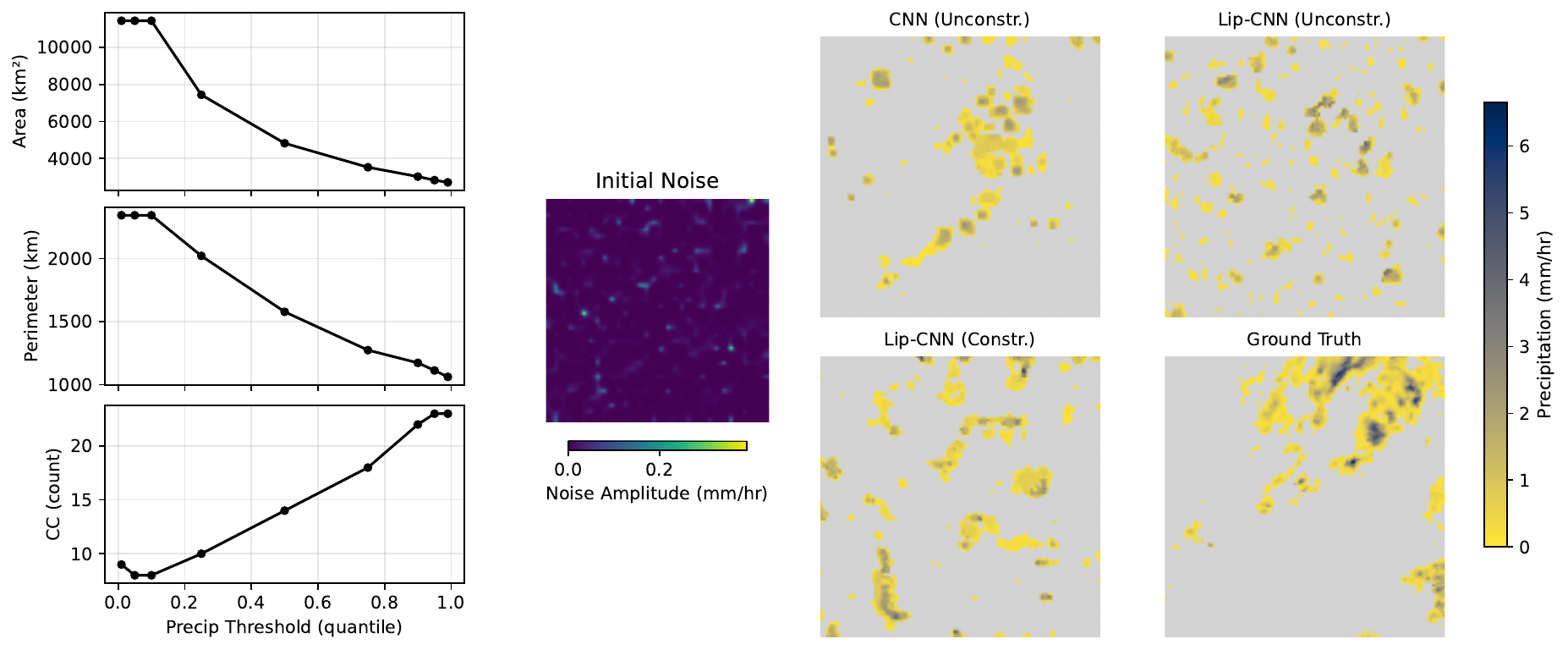}
    \caption{\textbf{Feature Inversion Results.} Qualitative comparison of precipitation fields reconstructed by inverting the target Minkowski vector $\boldsymbol\gamma_\mathrm{gt}$ for a test sample (left). All architectures recover the storm's magnitude, while only constrained models generates coherent, smooth intensity gradients characteristic of convective cells (bottom left)}
    \label{fig:inversion}
\end{figure}

Figure~\ref{fig:inversion} demonstrates that both the \textit{vanilla CNN} and \textit{constrained Lip-CNN} reconstruct coherent precipitation structures from geometric signatures, confirming Minkowski functionals encode sufficient morphological information to guide optimization from Gaussian noise to a localized storm object. However, textural quality diverges: the \textit{vanilla CNN} produces disjointed, "blob-like" structures with sharp, artificial boundaries, whereas Lipschitz-regularized models generate continuous intensity gradients that naturally reproduce the smooth decay from convective cores to stratiform peripheries. In appendices~\ref{appendix:eval_inv} and~\ref{appendix:eval_grads} we report further evaluation of the constrained Lip-CNN emulator's gradient quality.

\subsection{Deterministic super-resolution}

\begin{figure}
    \centering
    \includegraphics[width=.9\linewidth]{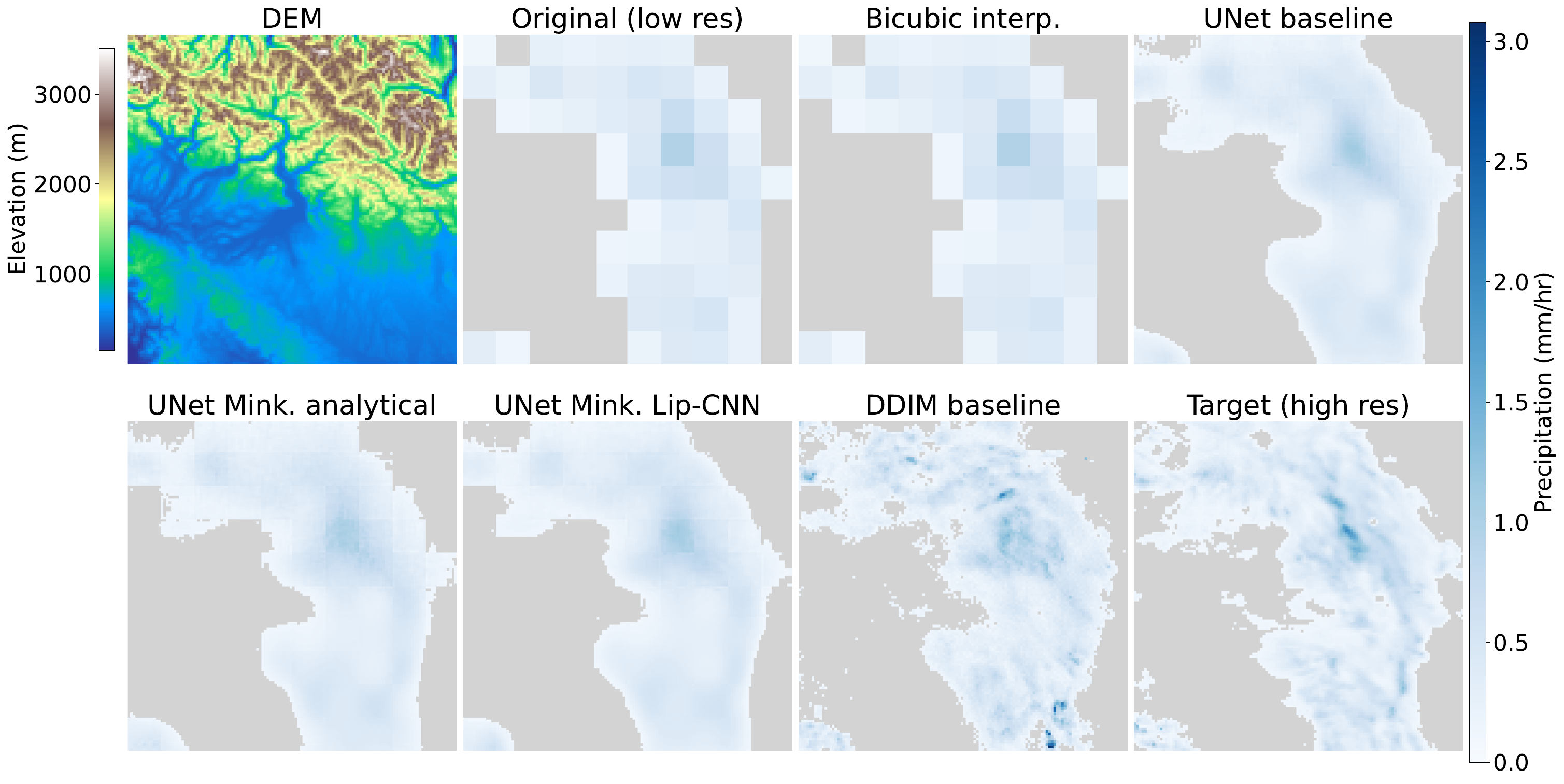}
    \caption{\textbf{Structural fidelity in precipitation downscaling.} The unconstrained UNet produces an amorphous shield, missing localized convective peaks. Integral-geometric constraints (analytical and Lip-CNN) yield only marginal improvement. The stochastic DDIM baseline recovers textural realism and multi-scale variance, confirming the limitations of deterministic optimization}    
    \label{fig:srcomparison}
\end{figure}

\begin{table}
    \centering
    \small
    \caption{\textbf{Quantitative comparison of super-resolution deterministic architectures.} Performance is measured via the Minkowski image loss ($\mathcal{M}$), the mean absolute error (MAE), the structure-amplitude-location (SAL) metric, and the radially averaged power spectral density (RAPS) error.}
    \begin{tabular}{lcccccc}
        \toprule
        \textbf{Architecture} & $\mathcal{M}$ ($\downarrow$) & $MAE$ ($\downarrow$) & SAL & RAPS ($\downarrow$) \\
        \midrule
        Interp.                                                 & 3.44 & 0.06 & 0.37, \textbf{-0.01}, 0.06 & 1.04 \\
        UNet                                                    & 3.21 & \textbf{0.05} & 0.32, 0.09, 0.15 & 2.88 \\
        UNet + $\mathcal{M}_{\text{Ana}}$                       & 2.74 & \textbf{0.05} & 0.41, -0.20, \textbf{0.03} & 1.37 \\
        UNet + $\mathcal{M}_{\text{Emu, Lip-CNN (Constr.)}}$    & 3.25 & \textbf{0.05} & 0.31, -0.29, \textbf{0.07} & 1.78 \\
        DDIM                                                    & 2.96 & 0.09 & \textbf{-0.08}, 0.26, 0.17 & \textbf{0.97} \\
        \bottomrule
    \end{tabular}
    \label{tab:sr-comparison}
\end{table}

Table~\ref{tab:sr-comparison} details super-resolution performance metrics. The bicubic interpolation baseline exhibits seemingly strong spectral fidelity (RAPS = 1.04). However, this metric is an artifact of spatial smoothing; interpolation merely redistributes low-frequency kinetic energy without generating physical high-frequency features. This failure is captured by its high topological error ($\mathcal{M} = 3.44$) and blocky visual structure (Figure~\ref{fig:srcomparison}).

Comparing the baseline UNet to topologically constrained architectures illustrates deterministic optimization challenges. The unconstrained UNet minimizes spatial distortion (MAE = 0.05) but actively dampens spatial variance. Figure~\ref{fig:srcomparison} confirms this: the UNet predicts an amorphous precipitation shield, failing to resolve localized convective cores, yielding the highest spectral error (RAPS = 2.88). Augmenting the UNet with the analytical Minkowski loss ($\mathcal{M}_{\text{Ana}}$) quantitatively improves RAPS (1.37) and topological error ($\mathcal{M}=2.74$) without compromising MAE (0.05). However, visually, both analytical and emulator-constrained ($\mathcal{M}_{\text{Emu, Lip-CNN (Constr.)}}$) outputs only marginally improve upon the baseline. The fields remain artificially smooth, indicating the current surrogate loss implementation insufficiently enforces morphological realism. 

Benchmarked against the stochastic DDIM, a distinct trade-off emerges. The DDIM achieves the most realistic energy spectrum (RAPS = 0.97) and structure amplitude ($S=-0.08$), successfully synthesizing high-frequency speckle and intense peaks. However, it incurs higher pixel-wise error (MAE = 0.09). This elevated MAE manifests the double-penalty effect typical of stochastic downscaling: synthesized high-frequency details, while physically realistic in texture, misalign with ground-truth spatial coordinates. Although Minkowski-constrained UNets aim to bridge this gap by preserving exact spatial alignment, their inability to visually reproduce structural fidelity highlights a limitation. Future work must refine the weighting and optimization of integral-geometric constraints to match diffusion model quality without the associated computational sampling costs.

\section{Discussion and Conclusions}

This work establishes a rigorous framework for integrating non-differentiable integral-geometric domain knowledge into gradient-based training loops via learned surrogates, and demonstrates it on the EUMETNET OPERA radar dataset. We successfully trained a Lipschitz-regularized neural emulator to accurately map physical precipitation fields to their corresponding Minkowski functionals, structurally enforcing critical inductive biases such as monotonicity and the isoperimetric inequality. While incorporating differentiable surrogates into deep learning is a rapidly evolving frontier \citep{rehmann2025surrogatebaseddifferentiablepipelineshape, nguyen2026differentiablesurrogatedetectorsimulation}, our empirical results highlight fundamental challenges in transitioning from accurate metric emulation to effective generator optimization.

The constrained Lip-CNN achieved a 0\% isoperimetric violation rate, definitively eliminating the physically impossible geometries produced by unconstrained baselines. However, our super-resolution experiments reveal a fundamental trade-off: hard Lipschitz constraints, while essential for preventing optimization collapse, over-smooth the latent representation. The backpropagated error signals consequently lack the localized precision required to steer fine-scale spatial features; the surrogate recognizes macroscopic geometry but cannot provide the sharp, pixel-level corrective gradients needed to define distinct convective boundaries. Emerging libraries for generalized automatic differentiation of discrete operations \citep{paulus2026softjaxsofttorchempowering} offer complementary analytical tools, but sequential element-wise relaxations for complex topological metrics remain prone to gradient attenuation — reinforcing the case for a top-down learned surrogate once the gradient transfer bottleneck is resolved.

Ultimately, this framework addresses a ubiquitous requirement in Earth system science: the need to model fields where geometrical accuracy is encoded in coherent structures rather than independent grid points. For applications like statistical downscaling, where the spatial organization of extreme events directly dictates localized climate impacts, our findings underscore that formulating mathematically consistent loss functions is only half the solution. Future work must couple these rigorous topological constraints with stochastic generative architectures — such as denoising diffusion probabilistic models — to combine the strict integral-geometric guarantees of the Minkowski image loss with the textural synthesis capabilities required for full morphological realism.

\section*{Acknowledgments}
The authors thank L. Räss for technical assistance. We also acknowledge D. Nerini, D. Domeisen, V. Chavez, L. Moret, and O. Miralles for their contributions to the initial conceptualization of this work.  

\section*{Funding Statement}
F.Q., and T.B. acknowledge support from the Swiss National Science Foundation (SNSF) under Grant No. 10001754 (``RobustSR'' project).

\section*{Competing Interests} 
None.

\section*{Data Availability Statement}
The OPERA radar data used in this study are openly available under the CC BY license. Data can be accessed via the EUMETNET Open Radar Data (ORD) API. Access requires IP whitelisting; requests should be directed to the OPERA support team (\url{support.opera@eumetnet.eu}). Comprehensive documentation for data discovery and retrieval is available at \url{https://github.com/EUMETNET/openradardata-documentation}.

\section*{Ethical Standards}
The research meets all ethical guidelines, including adherence to the legal requirements of the study country. 

\section*{Author Contributions}
Conceptualization: F.Q., R.C., T.B. Methodology: F.Q., R.C. Software: F.Q. Formal analysis: F.Q., R.C. Investigation: F.Q. Data curation: F.Q. Writing - original draft: F.Q. Writing - review \& editing: F.Q., T.B. Visualization: F.Q. Supervision: T.B. All authors approved the final submitted draft.

\section*{Code Availability Statement}
The code implementing the full pipeline --- including the Minkowski functional emulator, 
the analytical approximation, and the super-resolution training and evaluation 
framework --- will be made publicly available through Zenodo and GitHub upon acceptance of this manuscript.

\bibliographystyle{apalike}
\bibliography{ref.bib}

\newpage
\appendix


\section{Minkowski functionals foundations}\label{sec:minkowski}

Let $K\subset \mathbb R^2$ be a compact, convex set. The Steiner formula \citep{schneiderStochastic2008} states that the area of the set of points within a distance $r$ from $K$ (for sufficiently small $r \ge 0$) is given by: 
\begin{equation}\label{eqn:steiner}
A_{K\oplus r} = A_K + P_K r + \chi_K \pi r^2 
\end{equation} 

where $A_K$ is the area, $P_K$ is the perimeter, and $\chi_K$ is the Euler characteristic of $K$. This formula holds not only for convex sets, but for any set with positive reach \citep{thale502008}, a condition satisfied by sufficiently regular boundaries. Equation~\eqref{eqn:steiner} then holds for all $r$ between 0 and the reach of $K$.

Equation~\eqref{eqn:steiner} is a special case of the general Steiner formula in dimension $n\in \mathbb N$, where $K \subset \mathbb R^n$ is compact and convex \citep[][Equation~1.2]{schneiderStochastic2008}. In the general formulation, the $n$-volume of the dilated set $K\oplus r$ may be expressed as an $n$-degree polynomial in $r > 0$, and the coefficients of this polynomial define the $n+1$ Minkowski functionals of $K$.

\section{Topological Filtering via Persistent Homology}

\subsection{Stability and Feature Significance}
While the Euler characteristic $\chi$ captures connectivity, the raw count of connected components is sensitive to noise; a single high-intensity pixel can create a spurious component, introducing discontinuities. To rigorously separate "signal" (stable storm cells) from "noise" (transient artifacts), we employ persistent homology.

Let $X$ be a smooth function on domain $D$. We examine the excursion sets $E_u = \{s \in D : X(s) > u\}$. Components are ``born'' at local maxima and ``die'' when they merge with older components at saddle points (as $u$ decreases). This lifecycle is captured by persistence pairs $(t_{\mathrm{birth}}, t_{\mathrm{death}})$.

\begin{proposition}\label{prp:persistence}
For any threshold $u$, the number of connected components of the excursion set above level $u$ equals the number of 0-dimensional persistence pairs such that $t_{\mathrm{death}} \leq u < t_{\mathrm{birth}}$.
\end{proposition}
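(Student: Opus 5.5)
We work with the superlevel filtration $\{E_u\}$, run by decreasing $u$, and use the elder rule that defines the 0-dimensional persistence pairs. Write $\beta_0(u)$ for the number of connected components of $E_u$. We will show that $\beta_0(u)$ equals the number of pairs whose interval $[t_{\mathrm{death}}, t_{\mathrm{birth}})$ contains $u$. The essential one (the global maximum, which never dies) gets the convention $t_{\mathrm{death}}=-\infty$, or $\min X$ on a bounded domain.

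\textbf{Step 1: regularity.} Assume $X$ is smooth with isolated, non-degenerate critical points; on a grid, $X$ is a generic piecewise-constant or cubical function as computed by \texttt{GUDHI}. By Morse theory, the homotopy type of $E_u$ changes only at critical values. As $u$ decreases past such a value, one of three things happens: a local maximum appears and a new component is born; a saddle is crossed and two components merge (or a loop is closed, leaving $\beta_0$ unchanged); or nothing happens. Between critical values $\beta_0(u)$ is constant, so it suffices to track these events.

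\textbf{Step 2: the elder rule defines the pairs.} Each component of $E_u$ is labelled by its highest maximum. At a merging saddle with value $s$, the component with the lower maximum $m$ dies, giving the pair $(t_{\mathrm{birth}}, t_{\mathrm{death}})=(X(m),s)$. The surviving component keeps the older label. Thus each local maximum is paired exactly once, either with a merging saddle or as essential. For fixed $u$, the map sending each component of $E_u$ to its label is a bijection onto the set of maxima that are born and not yet dead at level $u$. It is injective because labels are distinct maxima lying in distinct components. It is surjective because an alive maximum's component has not been merged into an older one, so its label is still that maximum.

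\textbf{Step 3: count the alive maxima.} A maximum $m$ is alive at $u$ exactly when $X(m)>u$, since $E_u$ uses a strict inequality, and it has not yet died, meaning its death value satisfies $t_{\mathrm{death}}\le u$. With $u$ decreasing, the merge occurs once $u$ drops to the saddle value. This gives $\beta_0(u)=\#\{\text{pairs}: t_{\mathrm{death}}\le u<t_{\mathrm{birth}}\}$.

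The delicate part is the boundary conventions, together with the direction convention used in the statement, where the birth value exceeds the death value. At $u=t_{\mathrm{death}}=s$ one must check whether the saddle point itself lies in $E_u$. With the strict set $\{X>u\}$ the two components are still separate at $u=s$, which would argue for $t_{\mathrm{death}}<u$. To match the stated half-open interval, I would identify $E_u$ with the closed superlevel set for the death side; equivalently, note that the two conventions differ only on the finite set of critical values. Ties between equal maxima, handled by a fixed tie-breaking order, and the essential class also need care. In the discrete cubical setting I would repeat the argument using union-find on the filtration order, which is exactly the standard 0-dimensional persistence algorithm.
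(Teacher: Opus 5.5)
Your counting argument is sound and takes a genuinely different route from the paper's. The paper's proof never looks at critical points. It writes the number of components of $E_u$ as $\beta_0^{u,u}=\mathrm{rank}\,\mathsf{H}_0(E_u)$ and reads the count off the fundamental lemma of persistent homology (the case $k=l$ of $\beta_p^{k,l}=\sum_{i\le k}\sum_{j>l}\mu_p^{i,j}$), i.e.\ the classes born before $u$ and dying after it. You instead reprove that special case by hand for $p=0$: you take the elder rule as the definition of the pairs and exhibit an explicit bijection between components of $E_u$ and maxima born but not yet dead at $u$.

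The paper's version is two lines and applies to any tame filtration. It needs no non-degenerate critical points, no tie-breaking, and no treatment of critical points on $\partial D$. Your Step~1 omits those boundary points: on a bounded patch, components are also born at boundary maxima and merge at boundary saddles, though the elder rule handles them unchanged. Your version is self-contained, mirrors the union-find computation that \texttt{GUDHI} actually performs, and makes the endpoint conventions explicit, which the paper's appeal to the lemma glosses over.

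That explicitness is exactly where your write-up goes wrong: the boundary analysis is backwards. With $E_u=\{X>u\}$, a saddle of value $s$ is \emph{not} in $E_s$. So at $u=s=t_{\mathrm{death}}$ the two components are still separate and the younger class is still alive. That is precisely what the inclusive inequality $t_{\mathrm{death}}\le u$ records, just as $m\notin E_{X(m)}$ gives the strict $u<t_{\mathrm{birth}}$. The strict convention therefore yields the stated half-open interval at every $u$, with no patch needed. In Step~3 the merge happens once $u$ drops \emph{below} the saddle value; as written, your sentence contradicts the inequality you then state.

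The patch you propose would actually break the statement. Putting the saddle into $E_s$ merges the components at $u=s$, so the stated count exceeds the number of components there. The closed convention $\{X\ge u\}$ corresponds to $t_{\mathrm{death}}<u\le t_{\mathrm{birth}}$; this is what \texttt{GUDHI} applied to $-X$ realizes, and it is the rule the paper's appendix implementation uses. The fallback that the conventions ``differ only on finitely many critical values'' does not prove a claim made for every threshold $u$. Finally, give the essential class $t_{\mathrm{death}}=-\infty$: with $t_{\mathrm{death}}=\min X$ nothing is counted for $u<\min X$, although $E_u=D$ is connected.
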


\begin{proof}[Proof of Proposition~\ref{prp:persistence}]
    The claim follows almost immediately from the fundamental lemma of persistent homology \citep[][p.~181]{edelsbrunnerComputational2010}.
    The rank of the 0$^{\mathrm{th}}$ persistent homology group of the excursion set $\mathsf{H}_0(E_u)$ is given by the Betti number $\beta_0^{u,u} = \mathrm{rank}\,\mathsf{H}_0(E_u) = \{\text{\# connected components of }E_u\}$. Now, as a consequence of the fundamental lemma, $\beta_0^{u,u}$, which is the number of 0-dimensional homology classes alive at $u$, may be expressed as the number of 0-dimensional homology classes that are born before $u$ (at some $t_{\mathrm{birth}}>u$) and die after $u$ (at some $t_{\mathrm{death}}\leq u$).
\end{proof}

To stabilize the learning signal, we filter the persistence diagram by discarding pairs with a lifetime $t_{\mathrm{birth}} - t_{\mathrm{death}} < \epsilon$. This excludes points close to the diagonal of the persistence diagram, which correspond to low-persistence features indistinguishable from noise. By training on the filtered counts, we ensure the neural network focuses on learning robust, global topological structures rather than fitting numerical irregularities.

\subsection{Numerical Implementation and Target Generation} \label{appendix:gamma-targets}

We implement this filtration using the \texttt{GUDHI} library. Since standard algorithms compute sublevel set persistence, we input the negated precipitation field $-X$ to recover the superlevel set features required for precipitation clusters.

We fix the noise threshold at $\epsilon = 0.05\,\mathrm{mm\,h^{-1}}$. This value was determined by analyzing the distribution of feature lifetimes on a validation set. The persistence histogram exhibits a clear separation between high-frequency numerical noise (clustering near the diagonal of the persistence diagram) and robust topological features (the heavy tail). The chosen threshold $\epsilon$ corresponds to the ``elbow'' point of this distribution, ensuring that we filter out transient artifacts while retaining physically significant convective structures.

The counting process is fully vectorized: for any given quantile level $u$, we sum the significant persistence pairs that satisfy the condition in Proposition~\ref{prp:persistence}. Additionally, we explicitly handle the infinite-persistence feature (representing the global background component); this feature is included in the count only for low-intensity thresholds ($u \leq 0.01\,\mathrm{mm\,h^{-1}}$) to ensure consistent topology at the field boundaries.

To generate the complete $\boldsymbol{\gamma}$-matrix for the entire dataset, we execute a rigorous offline target generation pipeline. This process extracts a four-dimensional geometric summary for every sample: area, perimeter, number of connected components ($\beta_0$), and number of topological holes ($\beta_1$). 

First, we establish global physical intensity thresholds to ensure the evaluation levels are representative of the observed climatology. We compute an empirical cumulative distribution function by randomly sampling up to $5 \times 10^7$ precipitating pixels (exceeding the drizzle threshold of $0.1\,\mathrm{mm\,h^{-1}}$) strictly from the training split. The user-defined quantile levels are then mapped to these empirical physical thresholds.

The persistent homology computations are fully vectorized across these threshold domains. Using the persistence diagram obtained from the cubical complex, we isolate both 0-dimensional and 1-dimensional persistence pairs. For a given threshold $u$, a topological feature is counted if its birth is greater than or equal to $u$, its death is strictly less than $u$, and its total persistence (lifetime) exceeds the empirically defined noise thresholds (which can be tuned independently for $\beta_0$ and $\beta_1$). 

Concurrently, the extensive geometric measures are computed. Given a physical radar resolution of $2.0 \times 2.0\,\mathrm{km^2}$, the area at each threshold $u$ is calculated simply as the sum of the binary excursion set multiplied by the pixel area ($4.0\,\mathrm{km^2}$). To calculate a mathematically robust perimeter, we apply the marching squares algorithm to the binary masks at a 0.5 contour level. The total perimeter is derived by integrating the Euclidean lengths of the resulting sub-pixel contour segments and scaling by the spatial resolution. 

Finally, to manage the substantial combinatorial overhead of these exact topological and geometric evaluations, the target generation is executed offline using block-wise multiprocessing. The resulting multidimensional $\boldsymbol{\gamma}$-vectors are stored persistently in compressed Zarr arrays, which allows for highly efficient, chunked data loading within the downstream super-resolution optimization loop without bottlenecking the GPU accelerators.


\section{Analytical Approximation of Euler Characteristic}
\label{Appendix:Chi}

The pronounced performance gap in predicting the number of connected components ($R^2_{\mathrm{CC}}$) stems from a fundamental theoretical distinction in what the two surrogates actually evaluate. The analytical approximation computes the two-dimensional Euler characteristic ($\chi$), which is topologically defined as the number of connected components ($\beta_0$) minus the number of topological holes ($\beta_1$). Consequently, the analytical method yields a net structural scalar ($\chi = \mathrm{CC} - \mathrm{H}$), whereas the neural emulator is trained to exclusively regress the pure connected component count ($\mathrm{CC}$) derived from the exact ground truth. In complex precipitation fields, the prevalence of topological holes heavily skews the Euler characteristic, directly causing the severe degradation observed in the $R^2_{\mathrm{CC}}$ metric. Crucially, because the analytical formulation relies on localized morphological operations and continuous integrations, it is mathematically intractable to untangle the $\mathrm{CC}$ and $\mathrm{H}$ variables from the aggregated scalar $\chi$. This structural entanglement represents a significant limitation of the analytical surrogate, underscoring the critical advantage of the neural emulator, which can isolate and map specific geometric invariants without topological interference.

\section{Evaluation of Minkowski Functionals Emulations} 

\subsection{Qualitative analysis of Emulated Minkowski Functionals} \label{appendix:eval_emu}

\begin{figure}
    \centering
    \includegraphics[width=\linewidth]{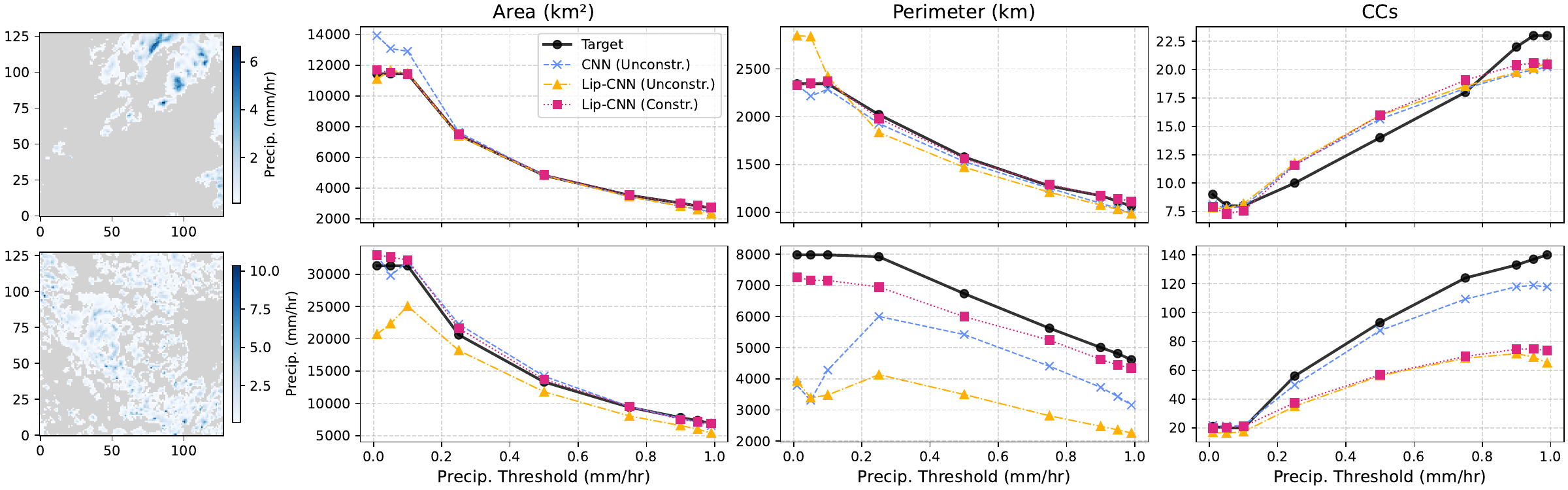}
    \caption{\textbf{Qualitative prediction analysis.} Comparison of ground truth (solid) and predicted (dashed) Minkowski functionals. \textbf{(Top)} A well-behaved storm where all models converge. \textbf{(Bottom)} A challenging, fragmented case. \textit{Unconstrained} models (blue/orange) predict geometrically impossible perimeters (violating monotonicity), whereas the \textit{Constrained} model (magenta) maintains consistency by design}
    \label{fig:storm-gamma-comparison}
\end{figure}

Figure \ref{fig:storm-gamma-comparison} illustrates the emulation behaviors on specific samples. For standard storm structures (top row), all models capture the geometry with high fidelity. However, divergences appear in complex cases (bottom row). Notably, while the \textit{vanilla CNN} predicts the area accurately, it fails to model the perimeter relationship correctly. The \textit{constrained} model, by virtue of the conditioning $\hat{P} = f(\hat{A}, \hat{r})$, maintains a consistent perimeter curve. Interestingly, for this specific high-fragmentation case, the vanilla CNN slightly outperforms the Lipschitz models on the Connected Components (CC) metric. This local discrepancy suggests that the smoothing induced by spectral normalization may occasionally suppress the detection of extremely small, grid-scale fragments, a known trade-off of Lipschitz regularization.

\subsection{Diagnostic Inversion and Spectral Analysis via Radially Averaged Power Spectral Density} \label{appendix:eval_inv}

\begin{figure}
    \centering
    \includegraphics[width=\linewidth]{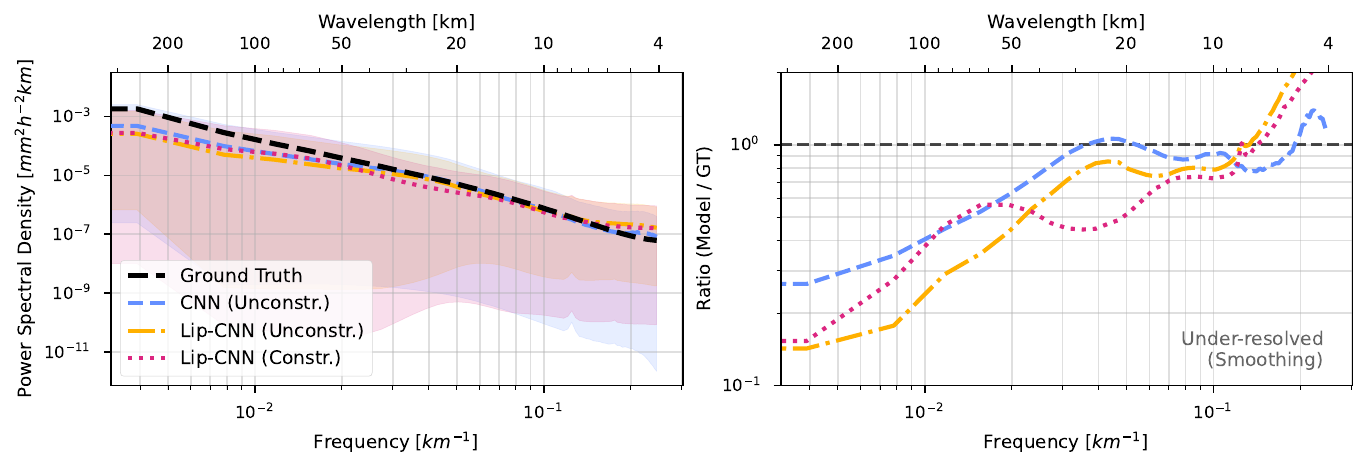}
    \caption{\textbf{Spectral Validation.} (Left) Radially Averaged Power Spectral Density comparing the energy cascade of the ground truth (black) against the reconstructions. (Right) The spectral ratio $S_{\text{model}}(k) / S_{\text{ref}}(k)$. Ideal performance corresponds to $y=1$}
    \label{fig:raspd}
\end{figure}

To verify that the emulator provides informative gradients, we perform a diagnostic inversion, steering a Gaussian noise vector $\mathbf{x_0}$ toward a target $\boldsymbol{\gamma}_{\text{true}}$ by minimizing the geometric discrepancy: 
\begin{equation} \label{eq:inversion} 
\mathbf{x}^* = \arg \min_{\mathbf{x}} \left( || \log(\hat{\boldsymbol{\gamma}}_{\theta}(\mathbf{x}) + \mathbf{1}) - \log(\boldsymbol{\gamma}_{\text{true}} + \mathbf{1}) ||_2^2 + \lambda_\mathrm{TV} \mathcal{R}_\mathrm{TV}(\mathbf{x}) + \lambda_{L^2} ||\mathbf{x}||_2^2 \right) 
\end{equation} 
where $\mathcal{R}_\mathrm{TV}$ is the anisotropic total variation norm ($\lambda_\mathrm{TV} = 10^{-5}$ and $\lambda_{L^2} = 10^{-6}$). 

To ensure the gradients do not introduce nonphysical noise, we validate the spectral fidelity of the reconstructions via the radially averaged power spectral density. To quantify the multiscale structural fidelity of the reconstructed fields, we compute the radially averaged power spectral density (RAPSD). Given a physical precipitation field $\mathbf{x} \in \mathbb{R}^{H \times W}$, we first apply a 2D Hanning window $w$ to mitigate spectral leakage at the boundaries. We then compute the 2D discrete Fourier transform (DFT), denoted as $\mathbf{\hat{x}}(k_x, k_y)$. The 2D power spectral density (PSD) is defined as:
\begin{equation} 
P(k_x, k_y) = \frac{1}{(HW)^2} |\mathcal{F}\{w \odot \mathbf{x}\}(k_x, k_y)|^2 
\end{equation}
Assuming statistical isotropy in the precipitation features, we reduce the 2D spectrum to a 1D profile by averaging the energy within annular bins of constant wavenumber magnitude $k = \sqrt{k_x^2+k_y^2}$. The RAPSD, $S(k)$, is calculated as the azimuthal mean:
\begin{equation} 
S(k) = \frac{1}{N_k} \sum_{(k_x, k_y) \in \mathcal{R}(k)} P(k_x, k_y) 
\end{equation}
where $\mathcal{R}(k)$ represents the set of discrete Fourier modes falling within the ring $k - \frac{1}{2} \leq \sqrt{k_x^2+k_y^2} < k + \frac{1}{2}$ and $N_k = |\mathcal{R}(k)|$ is the number of modes in that bin. This metric allows us to verify if the emulator reconstructs the correct power-law decay (energy cascade) characteristic of atmospheric turbulence.

Figure \ref{fig:raspd} shows that all three architectures reproduce the fundamental power-law decay characteristic of three-dimensional, moist atmospheric turbulence, with no single architecture emerging as a definitive winner. Remarkably, the vanilla CNN achieves spectral fidelity comparable to the constrained models across the inertial subrange. This suggests that the Minkowski image loss is sufficiently informative to enforce structural realism even in a standard unconstrained network, effectively steering it away from the ``checkerboard'' artifacts typically associated with pixel-wise optimization.

However, a common deficiency is observed at the largest scales: all architectures under-resolve the lower frequencies (wavelengths >50 km), exhibiting a spectral ratio below 1.0. This indicates that while the Minkowski image loss successfully constrains local texture and object shapes (high-frequency topology), recovering the total energy of large-scale synoptic features remains a challenge for optimization-based inversion starting from noise. This difficulty stems directly from the localized inductive bias of standard convolutional operators, which inherently struggle to coordinate coherent mass transport over large spatial domains. To address this fundamental limitation and fully capture large-scale spectral energy, future work must explore global integration architectures, such as Fourier neural operators (FNOs), as a necessary pathway to overcome the spatial constraints of conventional CNNs.

\subsection{Mechanistic Evaluation on Synthetic Fields} \label{appendix:eval_grads}

\begin{figure}
    \centering
    \includegraphics[width=\linewidth]{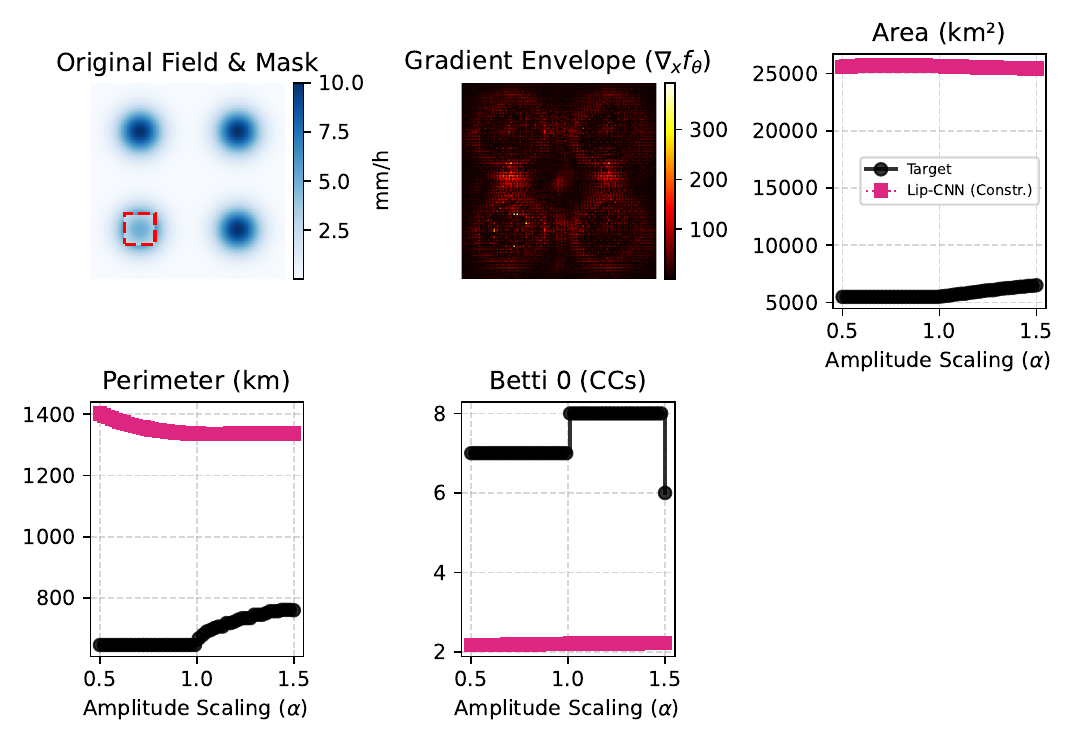}
    \caption{Mechanistic evaluation of the surrogate's sensitivity to localized physical perturbations. \textbf{(Left)} A synthetic multi-peak Gaussian precipitation field. The red dashed box denotes the spatial mask applied to a single convective cell, where the intensity is systematically scaled by an amplitude factor $\alpha$. \textbf{(Middle)} The spatial attribution map representing the surrogate's gradient envelope ($\nabla_{\mathbf{x}}f_{\theta}$), revealing a fragmented and noisy receptive field that lacks cohesive internal structure. \textbf{(Right)} The one-dimensional amplitude perturbation landscape for area, perimeter, and connected components (Betti 0). As the targeted cell's amplitude increases, the exact topological metrics (black) exhibit discrete step transitions when crossing the evaluation threshold. In contrast, the predictions from the constrained Lip-CNN (magenta) remain entirely static across the perturbation range, demonstrating that aggressive Lipschitz regularization over-smooths the latent space and renders the emulator blind to localized structural variations.}    
    \label{fig:mechproof}
\end{figure}

To rigorously evaluate the emulator's sensitivity to localized physical changes, we conduct a mechanistic proof using a synthetic multi-peak Gaussian field. As shown in Figure \ref{fig:mechproof}, a spatial mask is applied to a single convective cell, and its intensity is systematically perturbed by an amplitude scaling factor $\alpha$. As illustrated in the spatial attribution map, the surrogate's gradient envelope ($\nabla_{\mathbf{x}}f_{\theta}$) loosely identifies the active convective regions but exhibits a highly fragmented and noisy internal structure. More critically, the one-dimensional amplitude perturbation landscape exposes a fundamental failure in the surrogate's forward mapping. As the amplitude of the targeted cell varies, the exact integral-geometric targets (area, perimeter, and connected components) exhibit the expected discrete step transitions when the local intensity crosses the evaluation threshold. However, the constrained Lip-CNN predictions remain entirely static across the entire perturbation range. This extreme insensitivity demonstrates that the current regularization strategy---likely the interaction between aggressive spectral normalization and global pooling---severely over-smooths the latent representation. Consequently, the emulator becomes completely blind to isolated, localized intensity variations, rendering it incapable of providing the precise gradient signals required to optimize fine-scale precipitation structures.

\section{Super-resolution Training Protocols} \label{AppendixSR}

\subsection{Preprocessing}
To facilitate stable optimization across heavily skewed atmospheric variables and disparate physical units, we apply rigorous normalization protocols to the input tensors. The coarse-resolution precipitation inputs and high-resolution targets first undergo a log-linear transformation: variables are shifted via $x \mapsto \log(1+x)$ and subsequently divided by the global maximum of the log-transformed training set, constraining the values to a strictly positive $[0, 1]$ interval. For the stochastic diffusion baseline (DDPM), these normalized tensors are further linearly mapped to the $[-1, 1]$ domain to satisfy the standard Gaussian noise assumptions inherent to the reverse diffusion process. The static orographic forcing, represented by a high-resolution digital elevation model (DEM), is standardized using the domain-wide mean and standard deviation to yield zero mean and unit variance. The normalized DEM is concatenated channel-wise with the low-resolution precipitation field to explicitly condition the generative spatial downscaling on underlying topography. The target integral-geometric metrics ($\gamma$-vectors) are exclusively log-transformed to compress their dynamic range prior to loss computation.

During training, the dataset is dynamically augmented using random horizontal and vertical flips (each applied with a 50\% probability) to promote rotational and reflectional invariance. To address the severe class imbalance typical of meteorological radar data, where dry or drizzling pixels disproportionately dominate the spatial domain, the deterministic UNet architectures are trained using a stratified sampling strategy. We compute independent sample weights based on a binary wet/dry classification—defined by a localized maximum intensity threshold of 0.1 mm/h—and employ a weighted random sampler. This ensures the optimization landscape is adequately exposed to structurally complex, high-intensity convective events, preventing the network from collapsing to a climatological dry mean.

\subsection{Dynamic Loss Weighting and Trust Mechanism}
While the baseline log-space residual UNet is optimized solely via the MSE between the predicted and target log-transformed fields, the topologically constrained configurations augment this objective with the Minkowski image loss. The total loss function is defined as:
\begin{equation}
\mathcal{L}_{\text{total}} = \mathcal{L}_{\text{MSE}} + w_{\text{geom}}(t) \cdot w_{\text{trust}} \cdot \mathcal{L}_{\text{Minkowski}}
\end{equation}
where $w_{\text{geom}}(t)$ is a dynamic scaling factor governed by a cosine annealing schedule over the training epochs $t$. This allows the model to first learn large-scale mass conservation before focusing on fine-scale topological adjustments. Crucially, when utilizing the neural emulator, we introduce a dynamic trust mechanism $w_{\text{trust}} = \exp(-\tau \| \hat{\gamma}_{\text{true}} - \gamma_{\text{true}} \|^2)$ that penalizes the geometric loss component when the emulator's localized approximation error on the ground truth is high, parameterized by $\tau = 0.005725$. This ensures the generator is not penalized by out-of-distribution emulator hallucinations.

\subsection{Trust coefficient computation}

To compute the penalty parameter $\tau$, we perform an offline calibration pass over the validation dataset to characterize the neural emulator's baseline error distribution. High-resolution ground truth precipitation fields are denormalized from the scaled training domain back to physical intensity values ($\mathrm{mm\,h^{-1}}$) and masked by the $0.1\,\mathrm{mm\,h^{-1}}$ drizzle threshold. These physical fields are processed by the frozen emulator to predict the multidimensional integral-geometric components. To match the scale of the objective function, the predictions undergo a logarithmic transformation, $\log(1+x)$, before computing the mean squared error against the exact ground truth $\boldsymbol{\gamma}$-vectors. We extract the 90th percentile of this error distribution across the true climatology. The coefficient $\tau$ is algebraically derived to aggressively attenuate the geometric loss signal for out-of-distribution structures, specifically calibrating the trust factor $w_{\text{trust}}$ to decay to a minimal weight of $0.1$ when the emulator's approximation error reaches this 90th percentile threshold.

\newpage
\section{List of hyperparameters}

\begin{table}[htbp]
    \centering
    \begin{minipage}[t]{0.48\linewidth}
        \centering
        \caption{Hyperparameters and configuration settings for the emulator and feature inversion experiments. Parameters with $^{(*)}$ are calibrated using the \texttt{Optuna} package \citep{akiba2019optunanextgenerationhyperparameteroptimization}.}
        \label{tab:hyperparameters_emulator}
        \resizebox{\linewidth}{!}{%
        \begin{tabular}{l l l}
            \toprule
            \textbf{Category} & \textbf{Parameter} & \textbf{Value} \\
            \midrule
            \multirow{5}{*}{\textbf{Data \& Preprocessing}}
            & Patch Size & $128 \times 128$ \\
            & Pixel Resolution & $2.0$ km \\
            & Quantile Levels ($Q$) & 0.01, 0.05, 0.1, \dots \\ 
            & Drizzle Threshold & $0.1$ mm/h \\
            & Persistence Threshold & $0.05$ mm/h \\
            \midrule
            \multirow{6}{*}{\textbf{Architecture (Emulator CNN)}}
            & Base Channel Width & 32 \\
            & Max Channel Width & 256 \\
            & Network Depth & 5 layers \\
            & Latent Dimension & 256 \\
            & Normalization & GroupNorm ($G=8$) \\
            & Activation & GELU \\
            \midrule
            \multirow{7}{*}{\textbf{Training (Emulator)}}
            & Optimizer & Adam \\
            & Learning Rate$^{(*)}$ & $7.75 \times 10^{-5}$ \\
            & Weight Decay$^{(*)}$ & $2.49 \times 10^{-6}$ \\
            & Batch Size & 128 \\
            & Max Epochs & 50 \\
            & Early Stopping & 15 epochs \\
            & LR Scheduler & ReduceLROnPlateau \\
            \midrule
            \multirow{3}{*}{\textbf{Loss (Emulator)}}
            & Weight $\lambda_{\text{Area}}$ & $3.0$ \\
            & Weight $\lambda_{\text{Perimeter}}$ & $1.0$ \\
            & Weight $\lambda_{\text{CC}}$ & $1.5$ \\
            \midrule
            \multirow{4}{*}{\textbf{Inversion / Analysis}}
            & Optimization Steps & 200 \\
            & Inversion LR & $0.1$ \\
            & Reg. $\lambda_{\text{TV}}$ & $1 \times 10^{-5}$ \\
            & Reg. $\lambda_{L^2}$ & $1 \times 10^{-6}$ \\
            \bottomrule
        \end{tabular}%
        } 
    \end{minipage}\hfill
    \begin{minipage}[t]{0.48\linewidth}
        \centering
        \caption{Hyperparameters and configuration settings for the super-resolution experiments. Parameters with $^{(*)}$ are calibrated using the \texttt{Optuna} package \citep{akiba2019optunanextgenerationhyperparameteroptimization}.}
        \label{tab:hyperparameters_sr}
        \resizebox{\linewidth}{!}{%
        \begin{tabular}{l l l}
            \toprule
            \textbf{Category} & \textbf{Parameter} & \textbf{Value} \\
            \midrule
            \multirow{4}{*}{\textbf{Architecture (UNet SR)}}
            & Model & LogSpaceResidualUNet \\
            & Input Channels & 2 \\
            & Output Channels & 1 \\
            & Downscaling Factor & $12.5\times$ \\
            \midrule
            \multirow{4}{*}{\textbf{Architecture (DDPM)}}
            & Model & Conditional ContextUNet \\
            & Condition Channels & 2 \\
            & Fwd Diff. Timesteps & 1000 \\
            & DDIM Inference Steps & 50 \\
            \midrule
            \multirow{8}{*}{\textbf{Training (SR — shared)}}
            & Optimizer & AdamW \\
            & Batch Size & 128 \\
            & Max Epochs & 25 \\
            & Early Stopping & 5 epochs \\
            & LR Scheduler & ReduceLROnPlateau \\
            & UNet LR$^{(*)}$ & Optuna-tuned \\
            & UNet WD$^{(*)}$ & Optuna-tuned \\
            & DDPM LR / WD & $1.0 \times 10^{-5}$ \\
            \midrule
            \multirow{5}{*}{\textbf{Loss (UNet SR)}}
            & Base Loss & MSE (log space) \\
            & Geo Loss Schedule & Cosine annealing \\
            & Geo Warmup & 5 epochs \\
            & Gradient Clip & Max norm $1.0$ \\
            & Trust Coeff. $\tau$ & $5.725 \times 10^{-3}$ \\
            \bottomrule
        \end{tabular}%
        } 
    \end{minipage}
\end{table}

\end{document}